\def\algge{{\mathrm{ge}_{1}}}
\def\algmax{\max_{10}}
\def\algrnd{\mathrm{rnd}_{10}}
\newcommand{\paleo}{\textsc{Paleo}}
\newcommand{\courses}{\textsc{Courses}}
\newcommand{\retail}{\textsc{Retail}}
\newtheorem{theorem}{Theorem}
\newtheorem{proposition}{Proposition}
\newtheorem{lemma}{Lemma}
\newtheorem{problem}{Problem}
\newtheorem{example}{Example}
\newtheorem{definition}{Definition}
\newcommand{\qed}{\hfill \ensuremath{\Box}}
\newenvironment{proof}[1][Proof]{\begin{trivlist}
\item[\hskip \labelsep {\bfseries #1}]}{\end{trivlist}}
\begin{document}

\title{Multiple Hypothesis Testing in Pattern Discovery}

\author{Sami Hanhij\"arvi \ \ \ \ Kai Puolam\"aki \ \ \ \  Gemma C. Garriga \\
Helsinki Institute for Information Technology HIIT \\
Department of Information and Computer Science\\
Helsinki University of Technology, Finland \\
{\tt firstname.lastname@tkk.fi}}

\date{}

\maketitle

\thispagestyle{empty}
\begin{abstract}
The problem of multiple hypothesis testing arises when there are more
than one hypothesis to be tested simultaneously for statistical significance.
This is a very common situation in many data mining applications.
For instance, assessing simultaneously the significance of all frequent
itemsets of a single dataset entails a host of hypothesis, one for
each itemset. A multiple hypothesis testing method is needed to control
the number of false positives (Type I error). Our contribution in
this paper is to extend the multiple hypothesis framework to be used
with a generic data mining algorithm. We provide a method that provably
controls the family-wise error rate (FWER, the probability of at least
one false positive) in the strong sense. We evaluate the performance
of our solution on both real and generated data. The results show
that our method controls the FWER while maintaining the power of the test.

\vspace{.3cm}
\noindent
{\bf Keywords:} Multiple hypothesis testing, Pattern mining, Frequent itemsets
\end{abstract}

\section{Introduction}

This paper addresses the problem of assessing the statistical significance
of the patterns produced by data mining algorithms. In traditional
statistics the issue of significance testing has been thoroughly studied
for many years. Given observed data and a structural measure (namely,
test statistic) calculated from the data, a hypothesis testing method
can be used to decide whether the observed data was drawn from a given
null hypothesis. Under this framework, randomization approaches help
in producing multiple random datasets sampled from a specified null
hypothesis. If the test statistic of the original data deviates significantly
from the test statistics of the random datasets, then the null hypothesis
can be discarded and the result can be considered significant.

Recently, there has been an increasing interest in randomization techniques
for data mining (e.g.~\cite{Gionis06,Ojala08sdm}). For example,~\cite{Gionis06}
introduced a method to sample 0--1 matrices uniformly at random such
that the row and column margins of a matrix are preserved. The method
is extended by~\cite{Ojala08sdm} for real-valued matrices. As in
the traditional framework from statistics, the randomized samples
can be interpreted to be drawn from a null distribution and they are
used to test the statistical significance of discovered patterns.
In the case of 0--1 matrices, for example, the $p$-value of a frequent
set could be defined as the fraction of randomized datasets that have
a higher frequency for the set than with the original data.

The statistical significance testing problem is well understood when
the hypothesis to be tested are known in advance and the number of
hypothesis is fixed (see \cite{Dudoit03,Lehmann59,Westfall93}). In
the simplest case, there is only one hypothesis (such as a frequency
of a given frequent set) and the statistical hypothesis test controls
the probability of false positives, also called Type I error. A proper
statistical significance test of level $\alpha$ (typical choices
being $\alpha\in\{0.01,0.05\}$) falsely declares a pattern that follows
the null distribution as significant (false positive) with a probability
of at most $\alpha$.

The problem of \emph{multiple hypothesis testing} arises when there
are more than one hypothesis to test simultaneously. This is a very
common situation in data mining: for instance, an algorithm for frequent
set mining typically outputs a collection of itemsets whose frequency
is above a user-specified threshold in the data. As a simple example,
assume that we have 1000 independent patterns that all follow the
null hypothesis (are random effects in the data). A naively applied
statistical significance test of level $\alpha=0.05$ is likely to
falsely declare about 50 of these 1000 patterns as significant, even
though all of the measurements obey the null distribution. To remedy
the independent evaluation of hypothesis, the theory of multiple hypothesis
testing assesses a multiple comparison problem, that is, considers
simultaneously a family of statistical inferences. 

There exists traditional methods in statistics to tackle the problem
of multiple hypothesis testing, of which the Bonferroni correction
is the simplest and probably the best known. These methods vary with
respect to the power, type of error they control and the assumptions
they make of the dependency structure within the data. A common property
for all of these methods is that as the number of hypothesis to be
tested increases, the methods lose power, that is, they are less likely
to find the hypothesis that are not from the null distribution.

A data mining algorithm can produce a host of patterns. For example,
in the frequent set mining, the number of possible frequent sets is
exponential in the number of attributes. If each pattern is considered
as a separate hypothesis, then a direct application of multiple hypothesis
testing would be too naive: the method would not declare any pattern
as significant due to the large number of patterns to be tested.

A possible solution to overcome this problem consists of limiting
the hypothesis space. For example, in frequent set mining one could
only consider frequent sets of at most the given length. Another limiting
approach was proposed by \cite{Webb06} for the specific application
of association rule mining, where the data is first split into two
folds. The first half of the data is used to run the data mining algorithm
to define the hypothesis (association rules), and the second half
of the data is then used to test for the significance of those patterns
using some of the known multiple hypothesis testing methods. Such
an approach works when the data can be split into two halves that
are independent one of the other, and also, when the algorithm can be run
on partial data. However, these conditions are not feasible for all applications:
consider for example finding patterns such as frequent subgraphs from
a network, which cannot be trivially split into two independent components.

A completely different approach for multiple hypothesis testing in
association rule mining was proposed by~\cite{Lallich06}. Their
idea is to use bootstrap to find an upper bound for the deviation
of the test statistic between random and original data, such that
it controls the probability of falsely declaring a pattern significant.
All association rules mined from the original data that have a larger
test statistic deviation from its mean than the chosen threshold,
will be declared significant. This method has the same limitations
as~\cite{Webb06} in that it can only be used for association rule mining.
Furthermore, bootstrapping transactions does not break the dependency
between antecedent and consequent. This is of course a choice of a
null hypothesis, but it may not make sense in all association rule
mining contexts.

Our contribution in this paper is to provide a proper definition of
$p$-value for patterns using the randomized samples, and show that
with this $p$-value the known multiple hypothesis testing methods
can be used directly on the patterns output by a generic data mining
algorithm, regardless of the potentially large number of possible
patterns. We make no assumption on the data. The main contributions
of this paper are: the definition of a $p$-value suitable for 
data mining applications; a general method to assess significant
patterns using a valid statistical testing methodology; and experimental
verification of the validity and power of the presented method.

The paper is organized as follows. In Section \ref{sec:problemstatement}
we provide the problem statement and essential definitions and in
Section \ref{sec:main} we state our contribution without any formal
proof. In Section, \ref{sec:theory} we give a summary of multiple
hypothesis testing and prove the validity of our method; this section
can be skipped in the first reading. Section \ref{sec:relatedwork}
reviews the related methods, Section \ref{sec:experiments} contains
experiments and the paper ends with the discussion in Section
\ref{sec:conclusion}.

\section{Formal problem statement}

\label{sec:problemstatement}

We consider the general case where we have a \emph{data mining algorithm}
$A$ that, given an input dataset $D$, outputs a set of patterns
$P$, or $A(D)=P$. The set $P$ is a subset of a universe of all
patterns ${\cal \mathcal{P}}$. For different input datasets, the algorithm
may output a different set of patterns, still from ${\mathcal{P}}$.
We further assume defined a \emph{test
statistic} $f(x,D)\in{\mathbb{R}}$, associated to an input pattern
$x\in{\cal P}$ for the dataset $D$; large values of the statistic
are assumed to be more interesting for the user. We assume that we
have at our disposal a randomization algorithm with which one can
sample $n$ datasets i.i.d. from the \emph{null distribution} $\Pi_{0}$
corresponding to the null hypothesis $H_{0}$%
\footnote{Usually a null distribution would be defined for the test statistic
when the null hypothesis holds. In our case, the distribution of the
datasets, together with $A$ and $f$, defines the null distribution
for the test statistic.}. 
Our intuition is that if the test statistic for a given pattern
$x$ is an extreme value in the null distribution, then we can declare
the pattern significant. 
We denote the datasets sampled from the null distribution by $D_{i}$,
where $i\in[n]$, and $[n]=\{1,\ldots,n\}$. 

Using the above definitions, we can define our problem as follows. 

\begin{problem}
\label{prob:uno} 
Given a data mining algorithm $A$, a dataset $D$,
a test statistic $f$ and a null distribution $\Pi_{0}$, which of
the patterns output by $A(D)$ are statistically significant?
\end{problem}

In this work we apply our method to frequent itemset mining and association
rule mining from 0--1 data, and also, frequent subgraph mining from
networks. However, our formulation is general and, unlike much of
the previous work, we do not restrict ourselves to any particular
types of data nor patterns. 

\begin{example}
In frequent itemset mining the dataset $D$ could be a 0--1 data
matrix, the set of all possible patterns $\mathcal{P}$ could be all
subsets of attributes (itemsets), the algorithm $A$ could be a level-wise
algorithm with a given frequency threshold and the test statistic
$f(x,D)$ could be the frequency of the itemset $x$ in data matrix
$D$. The null distribution $\Pi_{0}$ could be the uniform distribution
over all binary matrices of the same size with fixed row and column
margins; datasets from this null distribution can be sampled using
the swap randomization presented in \cite{Gionis06}. Our objective
would be to decide which of the frequent itemsets output by the algorithm
$A$ are statistically significant. 
\end{example}

The methods of statistical significance testing often make assumptions
about the shape of the null distribution (e.g., that the statistics 
follow a normal distribution). We do not make such assumptions, but 
we require that the algorithm $A$ satisfies the {\it minP-property},
which will be defined later in Section~\ref{sect:minp}.

\section{Main contribution: A significance testing method}

\label{sec:main}

In this section we state succinctly the main contribution of this
paper, that is, a method to test the significance of patterns within
the framework discussed in Section \ref{sec:problemstatement}. The
detailed discussion with derivations and references are presented
in Section~\ref{sec:theory}, and the experimental results are presented
in Section~\ref{sec:experiments}.

We first define two empirical $p$-values: the first one is the sample-based
empirical $p$-value in Definition~\ref{def:samplebased}, which 
weights each randomized dataset equally; the second is the pool-based $p$-value
in Definition~\ref{def:poolbased},
where the patterns obtained from the randomized datasets are weighted equally. 

\begin{definition}[Sample based empirical p-value] Let $D$ be our original 
dataset, $D_i$ for $i \leq n$ be the $n$ datasets sampled from the null distribution
and $D_{n+1}=D$. Let also $f(x,D)$ be the test statistic associated to an input pattern 
$x \in \mathcal{P}$ returned by algorithm $A$. We define the sample-based
$p$-value as follows:
\begin{equation} 
p_{D}^{sample}(x)=\frac{\sum_{i=1}^{n+1}h(x,D,D_{i})}{n+1},\label{eq:sample}
\end{equation}
where,
\begin{equation}
h(x,D,D')=\left\{ \begin{array}{lcl}
\frac{\left|\{x'\in A(D')\mid f(x,D)\le f(x',D')\}\right|}{\left|A(D')\right|} & , & |A(D')|>0\\
0 & , & |A(D)'|=0\end{array}\right..
\label{eq:h}
\end{equation}
\label{def:samplebased}
\end{definition}

\begin{definition}[Pool based empirical p-value]
Let $D$ be our original 
dataset, $D_i$ for $i \leq n$ be the $n$ datasets sampled from the null distribution
and $D_{n+1}=D$. Let also $f(x,D)$ be the test statistic associated to an input pattern 
$x \in \mathcal{P}$ returned by algorithm $A$. We define the pool-based
$p$-value as follows:
\begin{equation}
p_{D}^{pool}(x)=\frac{\sum_{i=1}^{n+1}|\{y\in A(D_{i})|f(y,D_{i})\ge f(x,D)\}|}{\sum_{i=1}^{n+1}|A(D_{i})|}.
\label{eq:pool}
\end{equation}
\label{def:poolbased}
\end{definition}

The $p$-values of the sample-based method 
represent the probability that, given a random dataset from the
null distribution of datasets, a test
statistic has a more extreme value. The difference between the two
methods becomes from the weight to patterns. In the pool-based method,
each pattern of the output of any dataset is weighted equally. Therefore,
the datasets that result in more patterns have more control over
the $p$-value calculation. Conversely, the sample-based method treats
each dataset equally and the patterns in a single dataset share
the weight of the dataset uniformly.

Briefly, we denote by $p_{i},$ $i\in[m]$, where 
$[m]=\{1,\ldots,m\}$, and $m=|A(D)|$, the sorted empirical
$p$-values for the patterns $A(D)$ given by Equation (\ref{eq:sample})
or (\ref{eq:pool}), i.e., $p_{1}\le\dots\le p_{m}.$ We

The family-wise error rate (FWER) is defined as the probability of
falsely declaring at least one patten in $A(D)$ as significant, where
$A(D)$ represents the set of patterns output by algorithm $A$
using dataset $D$. A more formal definition will be provided in the
next section. To control the FWER at the level $\alpha$, we can
apply the Holm-Bonferroni method~\cite{Holm79}, to obtain the so-called
{\it adjusted $p$-values}. The equation
to compute the adjusted $p$-value of a pattern $x_{i}$ under Holm-Bonferroni
method is,
\begin{equation}
\tilde{p}_{i}^{H}=\left\{ \begin{array}{lcr}
\min(1,mp_{1}) & , & i=1\\
\min\left(1,\max(\tilde{p}_{i-1}^{H},(m-i+1)p_{i})\right) & , & i>1\end{array}\right..\label{eq:HB}
\end{equation}
Then, we declare the pattern $x$ significant if its adjusted $p$-value
satisfies $\tilde{p}_{i}^{H}\le\alpha$. Note that the Holm-Bonferroni
method is general and can be used with any definition of $p$-value
when the number of hypothesis is fixed. 

From here, our main result reads as follows.

\begin{theorem}
Given that the minP-property holds, we can declare the pattern $x_i$ significant
(reject the null hypothesis) if the adjusted $p$-value satisfies
$\tilde{p}_{i}^{H} \leq \alpha$ with the guarantee that the FWER 
is controlled at the level $\alpha$.
\label{thm:main}
\end{theorem}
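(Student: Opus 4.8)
The plan is to reduce the claim to the classical Holm--Bonferroni guarantee~\cite{Holm79} and then supply the one ingredient that the classical theory does not provide, namely control of the \emph{minimum} $p$-value over a data-dependent family of patterns. First I would recall the structure of the step-down procedure in (\ref{eq:HB}): a pattern $x_i$ is rejected exactly when $\tilde{p}_i^H \le \alpha$, which unwinds to the requirement that $p_j \le \alpha/(m-j+1)$ for every $j \le i$, and, because $\tilde{p}^H$ is non-decreasing in $i$, the set of rejected patterns is a prefix $x_1,\dots,x_k$. Let $I_0 \subseteq [m]$ index the true null patterns among the $m = |A(D)|$ outputs and write $m_0 = |I_0|$. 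If $\hat\imath = \min\{j : j \in I_0\}$ is the rank of the first true null, then at most $m-m_0$ non-null patterns can precede it, so $m-\hat\imath+1 \ge m_0$. Any false rejection therefore forces $x_{\hat\imath}$ to be rejected, whence $p_{\hat\imath} \le \alpha/(m-\hat\imath+1) \le \alpha/m_0$. Since $p_{\hat\imath} = \min_{i \in I_0} p_i$, this yields
\begin{equation}
\mathrm{FWER} \le \Pr\!\left(\min_{i \in I_0} p_i \le \alpha/m_0\right),
\end{equation}
and it suffices to bound the right-hand side by $\alpha$.

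The second step is to establish this bound, and here the fixed-$m$ theory breaks down, because both $m$ and the set $I_0$ are random, determined by the realized dataset $D$. This is precisely the gap that the minP-property (Section~\ref{sect:minp}) is designed to close. I would invoke it to assert that the minimum empirical $p$-value taken over the true-null patterns behaves no worse than a single valid $p$-value, i.e.
\begin{equation}
\Pr\!\left(\min_{i \in I_0} p_i \le \frac{\alpha}{m_0}\right) \le \alpha ,
\end{equation}
uniformly over the randomness in which patterns the algorithm chooses to emit. Substituting this into the previous display gives $\mathrm{FWER} \le \alpha$, which is the strong-sense control claimed.

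To justify that the empirical $p$-values of Definitions~\ref{def:samplebased} and~\ref{def:poolbased} are admissible inputs to this argument, I would use exchangeability: under $H_0$ the original dataset $D = D_{n+1}$ is itself drawn from $\Pi_0$, so $D_1,\dots,D_{n+1}$ are i.i.d. and hence exchangeable. For a true-null pattern the rank of its statistic among the $n+1$ datasets is then uniform, which yields the marginal validity $\Pr(p_i \le t) \le t$ that both the sample- and the pool-based constructions inherit; the minP-property is what upgrades this marginal statement to the joint control over $\min_{i \in I_0} p_i$ needed above.

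The hard part will be the second step: verifying that the minP-property genuinely delivers the joint bound on $\min_{i \in I_0} p_i$ despite the random, potentially exponential number of candidate patterns, and doing so in the strong sense (worst case over which subset is truly null). Everything else is a routine unwinding of the Holm recursion together with a standard exchangeability argument; the entire novelty --- and the reason the procedure escapes the usual collapse of power under an enormous hypothesis space --- resides in the minP-property, whose precise statement and proof I would defer to Section~\ref{sect:minp}.
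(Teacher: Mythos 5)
Your proposal follows essentially the same route as the paper's own proof: unwind the Holm--Bonferroni step-down to show that any false rejection forces the smallest true-null $p$-value to fall below $\alpha/m_0$ (the paper phrases this via $m'\ge m_0$, the number of patterns with $p$-value at least $p_D(x')$), and then invoke the minP-property to bound $\Pr(m_0\min_{i\in I_0}p_i\le\alpha)$ by $\alpha$. The step you flag as the hard part --- passing from the realized mixture of null and non-null patterns to the complete-null statement in Definition~\ref{def:minp} --- is handled in the paper only by the brief remark that $P_1$ does not affect $p_D(x')$, so your identification of where the real content lies is accurate.
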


The proof of Theorem \ref{thm:main} is given later in Section \ref{sub:validity}.
The proper definition of minP-property  and a test for checking
whether the calculated $p$-values on the data satisfy this property will
be discussed before in Section~\ref{sect:minp}. 
In practice, we will show in the experiments that in many practical
cases this minP-property is satisfied. 

\section{Theory of multiple testing of data mining results}

\label{sec:theory}

This section validates the result presented in Theorem~\ref{thm:main}.
First, we provide the preliminaries for the multiple hypothesis testing
framework; and next discuss the two empirical $p$-value calculation
methods and the minP-property. Finally, we show the correctness of
the main result of this paper.

In the remainder of the paper we ignore the sampling error due to
the finite number of samples from the null distribution, that is,
we assume that $n$ is large enough. We also assume 
that the data mining algorithm always outputs at least one pattern. 

\subsection{Multiple hypothesis testing}

In this section we provide a short summary of the theory and methods
of multiple hypothesis testing. See \cite{Dudoit03,Westfall93} for a
review and further references.

Consider the problem of testing simultaneously $m$ null hypothesis
$H_{0i}$, $i\in[m]$. It is assumed that the number of hypothesis
to be tested, $m$, is known in advance, while the numbers $m_{0}$
and $m_{1}=m-m_{0}$ of true and false null hypothesis, respectively,
are unknown parameters. With each hypothesis we have associated a
test statistic value $T_{i}$ and a corresponding $p$-value $p_{i}$,
$i\in[m]$. A $p$-value $p_{i}$ is defined as a probability that
the test statistic value is at least $T_{i}$ under the null hypothesis
$H_{0i}$. The values of $p_{i}$s are sometimes called \emph{unadjusted
$p$-values}.

In the simplest case, there is only one hypothesis ($m=1$). A valid
level $\alpha$ statistical test is such that the hypothesis is declared
significant, i.e., the null hypothesis is rejected, if $p_{1}\le\alpha$.
This happens with a probability of at most $\alpha$ if the data is
sampled from the null distribution. Falsely declaring a pattern significant
(false positive) is called a Type I error, while falsely declaring
a pattern non-significant (false negative) is called a Type II error.
A standard approach is to specify an acceptable level $\alpha$ for
the Type I error rate and construct a test, i.e., choose a test statistic,
that minimizes the Type II error rate, that is, maximizes the \emph{power}
of the test.

For multiple hypothesis testing $m>1$, the situation is no longer
as straightforward. Following \cite{Dudoit03}, we denote
by $R$ the number of hypothesis declared significant; by $S$ and
$U$ the numbers of hypothesis correctly declared significant and
non-significant, respectively; and by $V$ and $T$ the number of
hypothesis declared incorrectly significant and non-significant, respectively.
The count $V$ corresponds to the number of Type I errors (false positives),
while $T$ corresponds to the number of Type II errors (false negatives).
See Table \ref{tab:multiple} for a summary. %
\begin{table}
\begin{centering}
\begin{tabular}{l|cc|l}
 & Not declared  & Declared  & \tabularnewline
 & significant  & significant  & \tabularnewline
\hline 
True null hypothesis  & $U$  & $V$  & $m_{0}$\tabularnewline
Non-true null hypothesis  & $T$  & $S$  & $m_{1}$\tabularnewline
\hline 
 & $m-R$ & $R$ & $m$ \tabularnewline
\end{tabular}
\par\end{centering}

\caption{\label{tab:multiple} Multiple hypothesis testing. $R$ and $m$ are
observed counts, while $S$, $T$, $U$, $V$, $m_{0}$ and $m_{1}$
are unknown. $V$ is the number of Type I errors and $T$ the number
of Type II errors.}

\end{table}

There are many ways to define the acceptable Type I error rate. We
use the \emph{family-wise error rate} (FWER). A statistical test
that controls the FWER at level $\alpha$ is such that the probability
of even one Type I error is at most $\alpha$, that is, $Pr(V>0)\le\alpha$.
Another control of Type I error is given by the \emph{false discovery
rate} (FDR), introduced by \cite{Benjamini95}. A statistical test
that controls the FDR at level $\alpha$ is such that the expected
fraction of Type I errors among the rejected hypothesis is at most
$\alpha$, that is, $E(Q)\le\alpha$, where $Q=V/R$ if $R>0$ and
$0$ if $R=0$.

The choice of control depends on the application. If even one false
positive would be disastrous, for example, the hypothesis would be
about if the various drugs are safe to use, then it is appropriate
to choose FWER. However, the FDR may be more appropriate choice, for
example, if the objective is to identify hypothesis for further study.

The multiple hypothesis testing methods are often defined in terms
of adjusted $p$-values. In the following, we review two tests ---
Bonferroni and Holm-Bonferroni --- that can be used to compute the
adjusted $p$-values while controlling the FWER.

The simplest and probably the best known multiple testing method that
controls the FWER is the Bonferroni test. The adjusted $p$-values
are given by \begin{equation}
\tilde{p}_{i}^{B}=\min{\left(1,mp_{i}\right)}.\label{eq:B}\end{equation}
 A hypothesis $i\in[m]$ is declared significant if $\tilde{p}_{i}^{B}\le\alpha$.

Advantages of the Bonferroni test are that it is simple and easy to
understand and implement, and that an adjusted $p$-value of a hypothesis
depends only on the unadjusted $p$-value of the same hypothesis.

We do not use the Bonferroni test, because a more powerful and slightly
more complicated test that controls the FWER was introduced by \cite{Holm79}:
the Holm-Bonferroni test given in Equation (\ref{eq:HB}). Neither
Bonferroni nor Holm-Bonferroni tests make any assumptions on the dependency
structure of the hypothesis. In our application this is an important
property, as the hypothesis (patterns output by the data mining algorithm)
can have strong correlations.

In the presence of $n$ samples from the null distribution we can
use empirical $p$-values, see \cite{North02} for discussion.

\subsection{The  minP-property assumption}
\label{sect:minp}

Before showing the validity of our method, we present first the minP-property
that we require the algorithm to satisfy.
This property guarantees weak control over
the FWER with the absence of false null hypotheses, i.e., under the
complete null hypothesis. 

\begin{definition}[minP-property]
Assume a dataset $D'$ is sampled from the
null distribution $\Pi_{0}$. Then it holds $\forall t\in [0,1]$ that \[
Pr(|A(D')|\min_{x\in A(D')}p_{D'}(x)\le t|H_{0}^{C})\le t,\]
where $p_{D'}(x)$ is the empirical $p$-value of the pattern $x$
output by the algorithm with dataset $D'$, and $H_{0}^{C}$ signifies
the complete null hypothesis.
\label{def:minp}
\end{definition}

Indeed, the minP-property enforces constraint to the way the test statistics
can vary for different number of outputs. If larger test statistics
are encountered only for a small number of output patterns, there
is an elevated risk of false positives. If a pattern has an extreme
test statistic value, it will have a small $p$-value. For a small
number of patterns, the Holm-Bonferroni adjustment will be small.
If both cases are true, the adjusted $p$-value will also be small,
possibly causing a false positive.

As shown later in Lemma~\ref{lem:minpm}, the minP property is always
satisfied if the data mining algorithm always outputs a constant
number of patterns. Our defined empirical $p$-values might not satisfy
the minP-property in all cases when the number of patterns output by
the algorithm varies. We define the following to test if the property
holds.

\begin{definition}[minP-test]
\label{def:minP-test}
Let $\hat{p}_{i}=|A(D_{i})|\min_{x\in A(D_{i})}p_{D_{i}}(x)$.
The minP-property is satisfied if for all $t\in[0,1]$,\[
\frac{|\{i|\hat{p}_{i}\le t\}|}{n}\le t.\]
\end{definition}

Notice that the minP-test can be carried out visually by 
plotting $\frac{|\{i|\hat{p}_{i}\le t\}|}{n}$
against $t$ and checking if the plotted line never exceeds
the diagonal line.

Actually, our two defined $p$-values admit the minP-property in a
variety of situations.  We make the following observation concerning
both methods.

\begin{lemma}
\label{lem:minpm}
The minP-property is always satisfied for both the sample and pool-based
methods if the data mining algorithm $A$ outputs a constant number
of patterns, that is, $m=|A(D')|$ for any $D'$. In this case, the
two $p$-values behave in the same way.\label{lem:constant}
\end{lemma}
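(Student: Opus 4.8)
The plan is to first reduce the lemma to a single statement and then prove that statement by a symmetry argument. I would begin by substituting the constant cardinality $|A(D_i)|=m$ directly into Definitions~\ref{def:samplebased} and~\ref{def:poolbased}. In the sample-based formula the inner normalization $|A(D_i)|$ and the outer $n+1$ combine into $(n+1)m$, while the pool-based denominator $\sum_i|A(D_i)|$ is exactly $(n+1)m$ as well; the two numerators then agree term by term. Hence both $p$-values equal
\[
p_D(x)=\frac{\sum_{i=1}^{n+1}\left|\{y\in A(D_i)\mid f(y,D_i)\ge f(x,D)\}\right|}{(n+1)m},
\]
which settles the second claim of the lemma and lets me treat the two methods together from here on.

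Next I would analyze $\min_{x\in A(D)}p_D(x)$. Since $p_D(x)$ is non-increasing in $f(x,D)$, the minimum is attained at the pattern of largest statistic, so writing $M_i=\max_{y\in A(D_i)}f(y,D_i)$ and $N=\sum_{i=1}^{n+1}|\{y\in A(D_i)\mid f(y,D_i)\ge M_{n+1}\}|$ (recall $D_{n+1}=D$) I obtain $m\cdot\min_{x}p_D(x)=N/(n+1)$. The minP-property thus becomes the requirement $Pr(N/(n+1)\le t\mid H_0^C)\le t$ for all $t\in[0,1]$.

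The key observation is then to bound $N$ below by a rank statistic of the per-dataset maxima. Let $K=|\{i\in[n+1]\mid M_i\ge M_{n+1}\}|$; every such dataset contributes at least its maximizing pattern to the count $N$, so $N\ge K\ge 1$, and it suffices to show $Pr(K\le t(n+1)\mid H_0^C)\le t$. Under the complete null $H_0^C$ the datasets $D_1,\dots,D_{n+1}$ are i.i.d., hence the maxima $M_1,\dots,M_{n+1}$ are exchangeable. A standard symmetry argument then gives $Pr(K\le k)\le k/(n+1)$ for every integer $k$: writing $K_j=|\{i\mid M_i\ge M_j\}|$, exchangeability makes all $K_j$ identically distributed, while for any fixed realization at most $k$ indices can satisfy $K_j\le k$ (the index of smallest $M_j$ among those already witnesses this). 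Averaging the indicator $\mathbf{1}[K_j\le k]$ over $j$ yields the bound, and since $K$ is integer-valued, $Pr(K\le t(n+1))=Pr(K\le\lfloor t(n+1)\rfloor)\le\lfloor t(n+1)\rfloor/(n+1)\le t$.

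I expect the exchangeability step to be the main obstacle, and specifically the handling of ties among the maxima. Without ties $K$ is uniform on $\{1,\dots,n+1\}$ and the bound is an equality, but ties can make $K$ larger than the naive rank; the clean way around this is the counting fact $|\{j\mid K_j\le k\}|\le k$, which holds regardless of ties and is exactly what the averaging argument needs. The remaining points — the equality of the two $p$-values and the inequality $N\ge K$ — are routine once the reduction is set up.
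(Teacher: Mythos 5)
Your proof is correct, but it takes a genuinely different route from the paper's. Both arguments share the same pivot --- after observing that the two $p$-values coincide and that the minimum $p$-value is attained at the pattern maximizing $f$, everything reduces to the rank of $\max_{x\in A(D')}f(x,D')$ among the per-dataset maxima --- but they diverge in how that rank is controlled. The paper works in the limit $n\to\infty$: it replaces the empirical $p$-value by its expectation over the null distribution, lower-bounds $m\min_x p_{D'}(x)$ by $\Pr_{\hat D}\bigl(\max_{y\in A(\hat D)}f(y,\hat D)\ge\max_{x\in A(D')}f(x,D')\bigr)$, and then invokes Proposition~\ref{lem:integral} (the probability integral transform for identically distributed variables). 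You instead keep $n$ finite and run an exchangeability argument on the $n+1$ maxima $M_1,\dots,M_{n+1}$, using the tie-robust counting fact $|\{j\mid K_j\le k\}|\le k$ and averaging over $j$. Your version buys two things the paper's does not: a finite-sample guarantee that does not rely on the paper's standing assumption that sampling error is negligible, and explicit handling of ties --- note that Proposition~\ref{lem:integral} as stated with equality implicitly assumes the distribution of the maxima is atomless, whereas only the inequality $\Pr(\Pr(x\le y)\le\alpha)\le\alpha$ is actually needed and is exactly what your rank argument delivers. The paper's version is shorter and makes the connection to the standard $p$-value validity argument more transparent. All the individual steps in your write-up (the equality of the two $p$-values, $m\min_x p_D(x)=N/(n+1)$, $N\ge K$, and the floor-function step $\Pr(K\le t(n+1))\le\lfloor t(n+1)\rfloor/(n+1)\le t$) check out.
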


For proving Lemma~\ref{lem:constant} we need first the following property.

\begin{proposition}
\label{lem:integral}For real valued $y$ and $x$, that are distributed
identically, and for any $\alpha\in[0,1]$, $Pr(Pr(x \le y)\le\alpha)=\alpha.$
\end{proposition}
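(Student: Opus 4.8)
The plan is to recognize this statement as the probability integral transform. First I would read the inner probability with $y$ held fixed: writing $F(t) = Pr(x \le t)$ for the common distribution function of the identically distributed variables $x$ and $y$, we have $Pr(x \le y) = F(y)$. Since $x$ and $y$ share the same law, $F$ is also the distribution function of $y$, so the outer probability is $Pr(F(y) \le \alpha)$ with $y$ drawn from $F$. Thus the claim is precisely that $F(y)$ is uniformly distributed on $[0,1]$, and the entire task reduces to verifying this classical fact.

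Next I would establish $Pr(F(y) \le \alpha) = \alpha$ directly from monotonicity. The boundary cases $\alpha \in \{0,1\}$ are immediate, so fix $\alpha \in (0,1)$. Assuming $F$ is continuous, since $F$ is non-decreasing with $\lim_{t \to -\infty} F(t) = 0$ and $\lim_{t \to \infty} F(t) = 1$, the intermediate value theorem supplies a point $q$ with $F(q) = \alpha$. By monotonicity and continuity of $F$, the event $\{F(y) \le \alpha\}$ agrees with $\{y \le q\}$ up to a $Pr$-null set (any interval on which $F$ is flat at level $\alpha$ carries zero probability). Hence $Pr(F(y) \le \alpha) = Pr(y \le q) = F(q) = \alpha$, which is the desired identity.

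The main obstacle is the role of continuity. If the shared law of $x$ and $y$ has atoms --- equivalently, if ties $x = y$ occur with positive probability --- then $F$ has jumps, $F(y)$ is no longer uniform, and the clean equality degrades into an inequality. I therefore expect the proposition to be intended under the continuous, large-sample regime already adopted in the paper, where the sampling error from finitely many null datasets is ignored. In the write-up I would state this continuity assumption explicitly and invoke it; the subsequent application in Lemma~\ref{lem:constant} only requires the transform in exactly that regime, so nothing is lost by restricting to it.
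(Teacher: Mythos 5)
The paper actually states Proposition~\ref{lem:integral} without proof---it is invoked as a known fact (the probability integral transform)---so there is no in-paper argument to compare against step by step; your proposal supplies the missing proof, and it is the standard and correct one. Writing $F$ for the common distribution function, $Pr(x\le y)=F(y)$, and for continuous $F$ the event $\{F(y)\le\alpha\}$ agrees up to a null set with $\{y\le q\}$ for a point $q$ with $F(q)=\alpha$, giving probability exactly $\alpha$. Your caveat about atoms is not pedantry: the exact equality genuinely fails for distributions with point masses, where one only gets $Pr(F(y)\le\alpha)\le\alpha$, and the test statistics in the paper's applications (itemset frequencies, empirical $p$-values on a grid of size $n+1$) are discrete, so the honest general form of the proposition is the inequality. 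Fortunately, inspecting how the proposition is used in the proof of Lemma~\ref{lem:constant} shows that only the bound $\le\alpha$ is needed to conclude the chain of inequalities, so restricting to the continuous regime (or weakening the statement to an inequality) loses nothing. Two further small points worth making explicit in a write-up: the statement tacitly requires $x$ and $y$ to be independent (the inner probability is over the marginal of one variable with the other held fixed), which holds in the application because the null datasets are sampled i.i.d.; and in Lemma~\ref{lem:constant} the proposition is applied with the inner probability over the \emph{right-hand} variable, i.e.\ to the survival function $u\mapsto Pr(u\le y)$ rather than to $F(y)$, but the identical argument goes through with $F$ replaced by $1-F(\cdot^{-})$.
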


\begin{proof}{\bf of Lemma~\ref{lem:constant}.}
The sample-based $p$-values can be written as:
\begin{eqnarray*}
p_{D'}^{sample}(x) & = & \frac{\sum_{i=1}^{n+1}h(x,D',D_{i})}{n+1}\nonumber \\
 & \stackrel{n\rightarrow\infty}{\longrightarrow} & \sum_{\hat{D}}Pr(\hat{D})h(x,D',\hat{D})\label{eq:sampleinf}.\\
\end{eqnarray*}
With this, the minP-property is then
\begin{eqnarray*}
 &  & Pr(m\min_{x\in A(D')}p_{D'}^{sample}(x)\le\alpha|H_{0}^{C})\\
 & = & \sum_{D'}Pr(D')\mathbb{I}\left(m\min_{x\in A(D')}p_{D'}^{sample}(x)\le\alpha\right)\\
 & = & \sum_{D'}Pr(D')\mathbb{I}\left(\min_{x\in A(D')}\left(\sum_{\hat{D}}Pr(\hat{D})\left|\{y\in A(\hat{D})\mid f(x,D')\le f(y,\hat{D})\}\right|\right)\le\alpha\right)\\
 & \le & \sum_{D'}Pr(D')\mathbb{I}\left(\sum_{\hat{D}}Pr(\hat{D})\mathbb{I}\left(\max_{x\in A(D')}f(x,D')\le\max_{y\in A(\hat{D})}f(y,\hat{D})\right)\le\alpha\right)\\
 & = & \alpha.\end{eqnarray*}
The function $\mathbb{I}(\cdot)$ returns 1 if the condition is true and 0 otherwise. The last step follows from Proposition~\ref{lem:integral}. To prove the
equality of the methods, consider\begin{eqnarray*}
p_{D'}^{sample}(x) & = & \frac{\sum_{i=1}^{n+1}h(x,D',D_{i})}{n+1}\\
 & = & \frac{1}{(n+1)m}\sum_{i=1}^{n+1}|\{y\in A(D_{i})|f(y,D_{i})\ge f(x,D')\}|\\
 & = & p_{D'}^{pool}(x).\end{eqnarray*}
\qed
\end{proof}

That is, in the simplest case, where the data mining algorithm is
expected to output approximately constant number of patterns, the
minP-property is expected to hold to a good accuracy. 

The minP-property is in practice not too restrictive, as shown later
by our experiments. A data mining algorithm may violate
the property if the distribution of $p$-values depends strongly on
the number of patterns output by the algorithm.

\begin{example}
\emph{Adversarial example.} Assume that we have a data mining algorithm
$A$ and null distribution such that when a dataset $D$ is sampled
from the null distribution the following is satisfied. With probability
of $\frac{4}{5}$, the algorithm $A(D)$ outputs one pattern with
a sample-based $p$-value $p_{D}^{sample}(x)$ sampled from uniform
$U(\frac{1}{10},1)$; and with probability of $\frac{1}{5}$, the
algorithm $A(D)$ outputs two patterns, one having a $p$-value from
$U(0,\frac{1}{10})$ and another having a $p$-value from $U(\frac{1}{10},1)$.
Here $U(a,b)$ is a probability distribution over real numbers that
is uniform over interval $[a,b]$ and zero elsewhere.

The above described pattern of $p$-values would occur for example
if the algorithm would output with probability $\frac{4}{5}$ a pattern
with a test statistics $f$ from $U(-1,0)$; and with probability
of $\frac{1}{5}$ two patterns with test statistics from $U(-1,0)$
and $U(0,1)$, respectively.

Choose $t=\frac{3}{5}$ and denote $p'=\min_{x\in A(D')}p_{D'}^{sample}(x)$.
If the algorithm outputs one pattern ($m_{0}=1$) we have $Pr(m_{0}p'\le\frac{3}{5})=\frac{5}{9}$
--- this happens with a probability of $\frac{4}{5}$. On the other
hand, if the algorithm outputs two patterns ($m_{0}=2$) we have $Pr(m_{0}p'\le\frac{3}{5})=1$
(as the smallest $p$-value is always at most $\frac{1}{10}$).

Summarizing, $Pr(m_{0}p'\le\frac{3}{5})=\frac{4}{5}\times\frac{5}{9}+\frac{1}{5}\times1=\frac{29}{45}\approx0.644$,
which does not satisfy the minP-property. Furthermore, the minP-property
neither holds for the pool-based $p$-values in this example.
\end{example}

Assessing the minP-property under the combination of randomization
method, algorithm and test statistic may be prohibitively complex
to do analytically. Still Definition~\ref{def:minP-test} corresponds
to a test for the minP-property, which indicates
whether the minP-property is violated. 

In Figure~\ref{fig:ar_minp}, the visual test for minP-property
is illustrated. The plot shows the empirical FWER under the complete
null hypothesis for different acceptance levels for FWER. If the diagonal
line is exceeded, the minP-property does not hold. However, since
the method is approximate, slight violations may be due to sampling
error and may be ignored. 

\begin{figure}[t]
\begin{centering}
\includegraphics{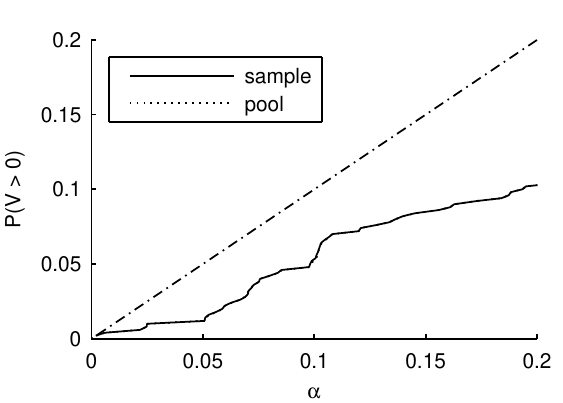}
\par\end{centering}
\caption{Visual minP-test. The horizontal and vertical axes represent
  the acceptable and true FWER levels, respectively. The solid line
  represents the measured FWER for different acceptable FWER levels,
  and the dash dotted line represents the threshold that should not be
  exceeded. We have used association rule mining for all methods with
  $\paleo$ dataset, see Section \ref{sec:assrules}. Both methods
  coincide.\label{fig:ar_minp}}
\end{figure}

\subsection{Validity of our method}

\label{sub:validity}

We prove that our method defined in Theorem~\ref{thm:main},
is a valid level $\alpha$ test by an argument
similar to the closed testing procedure~\cite{Marcus76} with minP
test, which is arguably the most concise way to derive the Holm-Bonferroni
test in the traditional multiple testing scenario. We finally conclude with
the proof of Theorem~\ref{thm:main}.

\begin{proof}{\bf of Theorem~\ref{thm:main}.}
Assume that the data mining algorithm outputs $m$ patterns $A(D)=P_{0}\cup P_{1}$
such that the patterns in $P_{0}$ obey the null hypothesis and the
patterns in $P_{1}$ do not, with $P_{0}\cap P_{1}=\emptyset$. Let
$m_{0}=|P_{0}|$. If $m_{0}=0$ the FWER is always trivially controlled;
in the following we consider the case $m_{0}>0$. Denote by $x'$
the pattern in $P_{0}$ that has the smallest $p$-value, that is,
$x'=\arg{\min_{x\in P_{0}}{p_{D}(x)}}$. In the Holm-Bonferroni test
of Equation (\ref{eq:HB}), we violate the FWER ($V>0$) if and only
if we declare $x'$ significant. Let $m'$ be the number of patterns
with a $p$-value no smaller than $p_{D}(x')$; $m'$ obviously satisfies
$m_{0}\le m'$. In the Holm-Bonferroni test we declare $x'$ significant
if $m'p_{D}(x')\le\alpha$. Due to minP-property,\[
Pr\{m'p_{D}(x')\le\alpha\}\le Pr\left\{ m_{0}p_{D}(x')\le\alpha\big|H_{0}^{C}\right\} \le\alpha.\]
The first inequality holds, since $m_{0}\le m'$ and $P_{1}$ do not
effect $p_{D}(x').$ Therefore,\[
Pr\{x'{\rm ~declared~significant}\}\le\alpha,\]
and in other words, $V>0$ with a probability of at most $\alpha$.
\qed
\end{proof}

\section{Related work}

\label{sec:relatedwork}

In the following section, we review the existing methods from the
literature related to the multiple hypothesis testing within the data
mining framework. Notice that none of these methods is directly comparable
to our contribution. The reason for this is that they control different
error or only calculate it, use specific randomization to derive the
significance, or are defined only for specific types of patterns.

\subsection{Methods measuring Type I errors}

In \cite{Zhang04}, Zhang et al. mine significant statistical quantitative
rules (SQ rules). The difference to our methods is that they only
calculate FDR, where we prove strong control for FWER. Also, they
restrict their approach to a specific setting in binary data and consider
only one class of null hypotheses. Our methods are not restricted to
a special type of pattern and allow a very broad spectrum of null
hypotheses. 

In the method of Zhang et al., the dataset is splitted into two sets
of attributes $X$ and $Y$. The antecedent, i.e. an itemset, of a
SQ rule is defined as a subset of $X$. A statistic value is also
attached to each rule, where the value is calculated from the transactions
that contain the antecedent in $X$, but only using the attributes
in $Y$. The significances are calculated by randomizing the dataset
so that the dependence between antecedent and consequent is broken,
and the statistic values are stored for each randomized dataset and
rule. Finally, a confidence interval for a single rule is defined
as the interval to which $(1-\alpha)$ statistic values fall. FDR is not shown to be statistically controlled, but it is calculated
by using randomized datasets and checking how many rules are declared
significant with a certain $\alpha$. The value of FDR is then obtained
by taking the mean of the numbers and dividing it by the number of
rules discovered from the original data.\\

In \cite{Megiddo98}, Megiddo et al. mine association rules and, given
a threshold $\alpha$ for the raw $p$-values, calculate the expected
number of Type I errors. Conversely, our methods have strong \emph{control
}for FWER. Furthermore, they consider only association rules, and
thus the method might not be as general as our methods.

In the method of Megiddo et al., they start by first mining frequent
itemsets and then using them to find association rules that have a
sufficiently large minimum confidence. The $p$-values are calculated
for itemsets from a Gaussian distribution with the mean set to the
minimum support value used to mine the itemsets, and the variance
set to the variance of a binomial distribution with the probability
$\mathrm{(minimum\ support)}/\mathrm{(nr\ transactions)}$. The authors
also discuss the $p$-values for association rules, but unfortunately
do not explicitly state how to calculate them. The multiple testing
procedure is then to construct random datasets with the same expected
column margins as the original dataset. The $p$-values for all patterns
in a single dataset are sorted ascending, and the mean of the smallest
$p$-values over all datasets is calculated as $V_{1}$, then the
second smallest as $V_{2}$, and so forth for $V_{k}$. These values
define thresholds for $\alpha$-values. For instance, if $V_{1}<\alpha\le V_{2}$,
then the expected number of Type I errors is at most two for the level
$\alpha$.

\subsection{Method that controls the probability of at most $V_{0}$ Type I errors}

A generalization of FWER is to control the probability that the number
of Type I errors exceeds a specified number, $V_{0}.$ In other words,
assure that $Pr(V>V_{0})\le\alpha.$ Standard FWER is controlled when
$V_{0}=0.$ 

The method by Lallich et al.~\cite{Lallich06b} is similar to ours in
that it draws random samples of datasets and defines control over FWER
(or similar) Type I error measure. The difference is that they define
to use bootstrapping, and thus the method might not be as general as
ours as the properties of the method may depend on bootstrapping.
Furthermore, the correction for multiple hypothesis is calculated
directly, which may require assumptions~\cite{Westfall93} and strong
control is not proven, which we do.

In the method, they first find a set of association rules from the
original data, and calculate some statistic for each rule. Then they
sample random datasets by using bootstrap over the transactions with
replacement. For each rule, the same statistics are calculated in
the random dataset, and the difference in the statistic values between
the original and random data are computed. The differences are sorted
in decreasing order and stored as $\epsilon(k,i)$, where $i$ is
the index of a random dataset and $k$ is the rank of a difference.
Finally, for a certain desired number $V_{0}$, a value $\epsilon(\alpha)$
is calculated that satisfies \[
\frac{|\{i|\epsilon(V_{0},i)\ge\epsilon(\alpha)\}|}{\mathrm{nr\ random\ datasets}}\le\alpha.\]
 The rules that have a statistic value higher than $\epsilon(\alpha)$
are selected.

\subsection{Methods that control the FWER}

In \cite{Bay01}, Bay and Pazzani mine contrast sets. The similarly
to our methods is the control of FWER, but they restrict themselves
to contrast sets, where our methods are general. Furthermore, using
Bonferroni correction may not be reasonable, since it is often overly
conservative and provides no theoretical improvement over Holm-Bonferroni. 

Contrast sets are similar to association rules but differ in that
a good rule shows contrast between two groups of transactions. The
data can be multivariate, but it is required that the transactions
are grouped to disjoint sets. The $p$-values are calculated for contrast
sets from a respective contingency table using $\chi^{2}$ approximation.
The rules that have too small values in the contingency table for
the $\chi^{2}$ to produce an adequate approximation are pruned away.
Using the Bonferroni inequality, the authors define confidence thresholds
for each size of itemset mined, which is dependent upon the number
of candidates of a specific size generated when mining frequent itemsets,
\[
\alpha_{l}=\min(\frac{\alpha}{2^{l}|C_{l}|},\alpha_{l-1}),\]
 where $l$ is the level, or size of itemsets, and $|C_{l}|$ is the
size of the candidate set for level $l$.

In \cite{Webb06,Webb07}, Webb considers association rules and
defines a method to overcome the problems caused by thresholding,
with, for example, minimum support. The number of actual hypotheses
may not correspond to the number of output patterns. Webb's main contribution,
the holdout method, is similar to our methods in that it considers
the problematic scenario of varying number of outputs. The Holm-Bonferroni
correction can be and is used in the paper, as do we. However, the
method by Webb is limited to scenarios where the data can be split
into two independent parts, and there is enough data to split it.
Only association rules are considered in the paper. Splitting may
not be possible for example with network data or spatial data. Furthermore,
the data mining algorithm needs to be able to operate on partial data.
Our methods do not have such constraints.

In the paper, Webb presents two methods by using the contingency
table of an association rule to find out its $p$-value. The first
method is to use normal Bonferroni-adjustment for the original $p$-values,
where the multiplier is the number of all possible patterns of at
most a preset maximum length set by the user. The other method is
the holdout method. The data is splitted in two, and a part of the
data, called exploratory data, is used to find the set of itemsets
to consider using normal association rule mining methods. After that,
the second part of the data, called holdout data, is used to assess
the statistical significance of the set of rules.

\subsection{Standard methods for FWER}

The standard methods for adjusting raw $p$-values to control FWER
include, among others, Bonferroni, Holm-Bonferroni, and
Sidak~\cite{Dudoit03}, as well as resampling based methods of
\cite{Westfall93}. A common property of all of these methods is that
they assume that the set of hypothesis (in our case, set of all
possible patterns) are defined beforehand and there is a raw $p$-value
for each of the potential patterns. This poses the problems explained
in the introduction.

\section{Experiments}

\global\long\def\sample{\emph{sample}}
\global\long\def\pool{\emph{pool}}
\global\long\def\minp{\emph{minp}}

\label{sec:experiments}

In this section we show the tests carried out to assess the quality
of the proposed methods.

\subsection{Synthetic data}

The first experiment was with synthetic data of real numbers, with
which the performance of the methods can be measured in a controlled
environment. The synthetic data follows the multivariate Gaussian
distribution \[
\Pi_{0}(x)=\mathcal{N}(x,\mu,C),\]
 where $\mu$ is the mean vector and $C$ the covariance matrix. The
generated real values correspond to the test statistic values $f$
used.

We began by using the methods under the complete null hypothesis and
measuring the empirical probability of rejecting at least one hypothesis,
or $Pr(V>0|H_0^C)$. This corresponds to the minP-test. 

The randomized datasets were generated by drawing a random vector
of length $k=100$ from the normal distribution with zero mean $\mu=0$
and covariance matrix \[
(C_{0})_{ij}=\begin{cases}
1 & ,i=j\\
\sigma & ,i\neq j\end{cases}.\]
The values in a random vector constitute a dataset $D$, which is
also of size $k$. The parameter $\sigma\in[-\frac{1}{k-1},1]$ controls
the amount of covariance between the data points. If $\sigma<-\frac{1}{k-1},$
the covariance matrix is no longer positive semi-definite, and therefore,
no longer a proper covariance matrix. We used $\sigma\in\{-0.0099,0,0.1,0.25,0.5,0.99\}.$

To simulate data mining methods, we used three different ad-hoc algorithms.
These are $\algge$, $\algmax$, and $\algrnd$. Assume now that we
have a dataset $D$ of real numbers. The first algorithm, $\algge$,
outputs the set of values that are greater or equal to 1 in $D$.
The second algorithm, $\algmax$, outputs the 10 largest values in
$D$. And the third algorithm, $\algrnd$, selects 10 numbers from
$D$ uniformly at random.

A single run starts by generating a dataset $D$ from the null distribution.
This data is then mined for patterns $P$ with a selected algorithm,
which results in a set of real values. These values are the test statistics
values of the mined patterns. Then, we draw $n=10000$ datasets from
$\Pi_{0}$, and calculate $p$-values for all $P$ using both methods.
Finally, the minimum value of the adjusted $p$-values are stored,
which is $\min(|P|\min_{y\in P}(p_{D}(y)),1),$ separate
for both methods.

We performed these runs 10000 times for each combination of algorithm
and magnitude of covariance. Figure~\ref{fig:toycorr} depicts the
results: the solid lines correspond to different magnitudes of covariance
and represent for each value of $\alpha$ the empirical probability
of $P(V>0).$

\begin{figure}[ht!]
\begin{centering}
\subfloat[$\algge$ with $\sample$]{\begin{centering}
\includegraphics{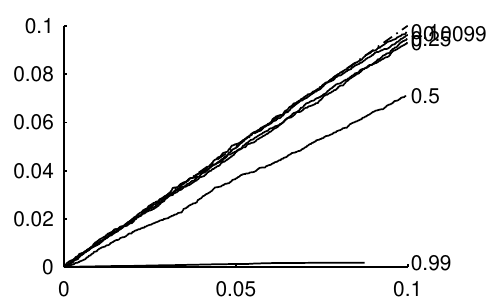}
\par\end{centering}

}\subfloat[$\algge$ with $\pool$]{\begin{centering}
\includegraphics{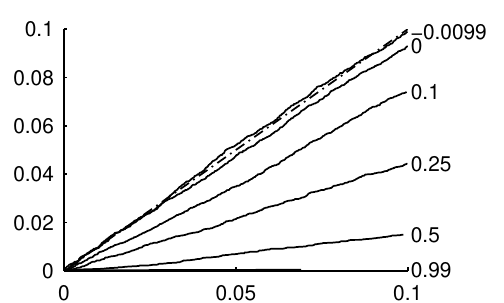}
\par\end{centering}

}\\
\subfloat[$\algrnd$ with $\sample$]{\begin{centering}
\includegraphics{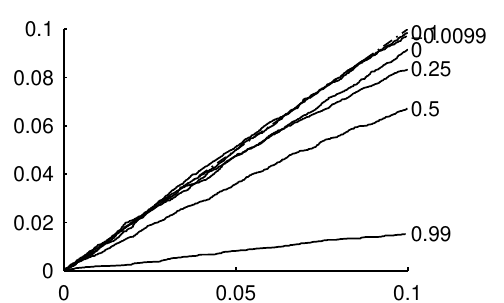}
\par\end{centering}

}\subfloat[$\algrnd$ with $\pool$]{\begin{centering}
\includegraphics{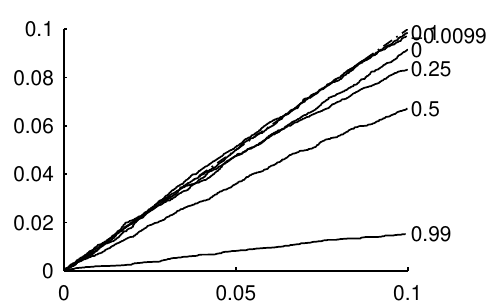}
\par\end{centering}

}\\
\subfloat[$\algmax$ with $\sample$]{\begin{centering}
\includegraphics{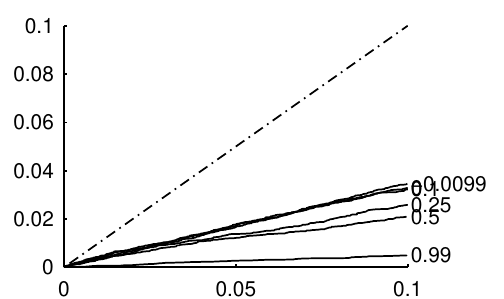}
\par\end{centering}

}\subfloat[$\algmax$ with $\pool$]{\begin{centering}
\includegraphics{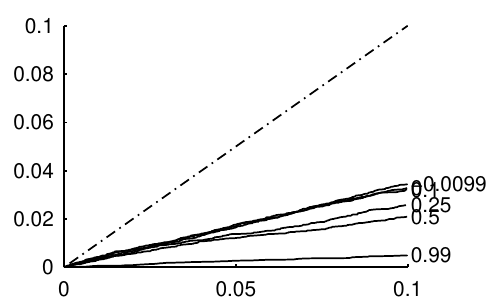}
\par\end{centering}

}
\par\end{centering}

\caption{MinP-test results for synthetic data. The solid lines correspond to
empirical probabilities of Type I error with a specific $\alpha$
level. To the right of line is the magnitude of covariance of the
respective test. The dash-dotted line is the $Pr(V>0)=\alpha$ line,
which should not be exceeded. }

\label{fig:toycorr} 
\end{figure}

As shown in the figure, when the correlation is negative, $-0.0099$,
and algorithm $\algge$ or $\algrnd$ is used, $Pr(V>0)\approx\alpha$.
This means that the proposed methods control the FWER very tightly
in some cases. The important observation is that the controlled threshold
is not exceeded, which translates to satisfying the minP-property.%
\footnote{The threshold is actually slightly exceeded at some points, but this
is due to the finite number of samples $n$ from the null distribution.%
}

The power was also tested with synthetic data of the same kind. Each
dataset was constructed by randomizing samples from the multivariate
Gaussian distribution with mean 0 for samples from the null distribution,
and mean 4 for samples from the alternate distribution, and correlation
between all samples. Hence, \[
(\mu)_{i}=\begin{cases}
0 & ,i\le m_{0}\\
4 & ,m_{0}<i\le m\end{cases},\]
 and \[
(C_{0})_{ij}=\begin{cases}
1 & ,i=j\\
\sigma & ,i\ne j.\end{cases}\]

The number of null hypothesis was set to $m_{0}=80$. The same simulations
with 10000 randomizations for datasets and 10000 overall runs, were
performed for different correlations $\sigma\in\{-0.0099,0,0.1,0.25,0.5,0.99\}$
and the algorithm $\algge$. The probability of Type I error (FWER)
and the mean fraction of Type II errors were calculated for both $p$-value
calculation methods. Figure~\ref{fig:toyroc} depicts ROC-curves
for $\sigma=0.5$ correlation with varying $\alpha.$

\begin{figure}[h]
\begin{centering}
\includegraphics{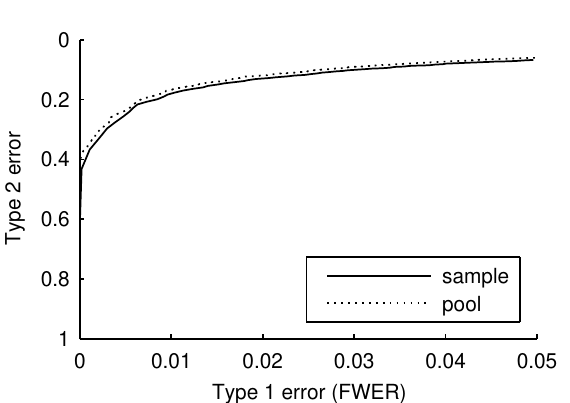} 
\par\end{centering}

\caption{Power results for synthetic data. ROC-curve (varying $\alpha$) of
the $p$-value calculation methods for $\sigma=0.5$ correlation.
\label{fig:toyroc} }

\end{figure}
To compare the methods for all correlations, we also calculated the
area under curve (AUC) from the ROC-curves for both methods and all
correlations. The results are shown in Figure~\ref{fig:toyauc}.%
\begin{figure}[h]
\begin{centering}
\includegraphics{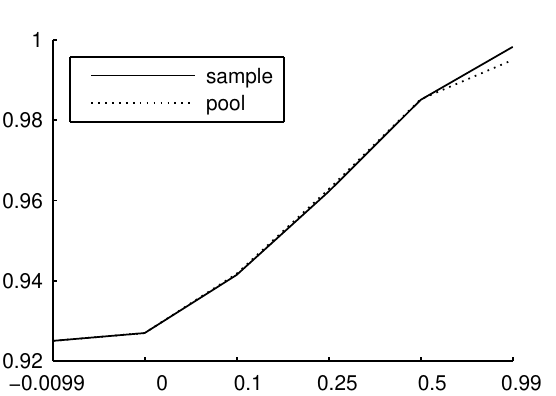} 
\par\end{centering}

\caption{Power results for synthetic data. Area under curves of the ROC-curves
for different $p$-value calculation methods and correlations. Higher
value represents better accuracy. \label{fig:toyauc} }

\end{figure}

As a conjecture from the synthetic data experiments, the minP-property
was always satisfied and the power of both methods are very similar
in these cases.

\subsection{Association rules}
\label{sec:assrules}

The second experiment was a more practical data mining scenario, namely,
association rule mining.

We used three different datasets: Paleo, Courses and Retail; all of
which were used in~\cite{Gionis06}. The property values of these
datasets are presented in Table~\ref{tab:datasets}.

\begin{table}[h]
\begin{centering}
\begin{tabular}{lcccc}
\hline 
Dataset  & \# of rows  & \# of cols  & \# of 1's  & density \%\tabularnewline
\hline 
$\paleo$  & 124  & 139  & 1978  & 11.48\tabularnewline
$\courses$  & 2405  & 5021  & 65152  & 0.54\tabularnewline
$\retail$  & 88162  & 16470  & 908576  & 0.06\tabularnewline
\hline
\end{tabular}
\par\end{centering}

\caption{Description of the datasets\label{tab:datasets}}

\end{table}

Each dataset was randomized 1000 times by maintaining the column margins
constant. Association rules were then mined from each dataset using
the minimum support thresholds from~\cite{Gionis06}: $7$, $400$
and $200$ for $\paleo,$ $\courses$ and $\retail$, respectively.
We used as test statistic $f$ the Fisher's exact test between the
antecedent and the consequent of an association rule. Holm-Bonferroni's
method was used to correct the $p$-values. Table~\ref{tab:ar_stats}
lists for different datasets the minimum support, number of rules
in the original data, and the mean and standard deviation of the number
of rules in the randomized datasets.

\begin{table}
\begin{centering}
\begin{tabular}{lccc}
\hline 
Dataset  & minimum support & $|P|$  & $|P_{i}|$ \tabularnewline
\hline 
$\paleo$ & 7 & 9004 & 577.2(24.0)\tabularnewline
$\courses$ & 400 & 51118 & 379.4(7.4)\tabularnewline
$\retail$ & 200 & 4148 & 2703.9(16.1)\tabularnewline
\hline
\end{tabular}
\par\end{centering}

\caption{Mining parameters and statistics for association rule mining. $|P|$
represents the number of association rules in the original data, and
$|P_{i}|$ is the mean number of association rules with random data.
Standard deviations are shown in parenthesis.\label{tab:ar_stats}}

\end{table}

While randomizing, we carried out the minP-test for all combinations
of dataset and $p$-value calculation method. The first half, 500,
of randomizations were used to gather minimum $p$-values from the
random datasets. By construction, the minimum $p$-values will necessarily
correspond to the largest test statistic values, and therefore, for
the first 500 random datasets, the largest test statistic value was
stored from each. These were then calculated $p$-values using the
latter 500 random datasets and both methods. 

In all cases, the minP-property was clearly satisfied. Figure~\ref{fig:ar_minp}
depicts one test result; all other minP-test results are presented
in Appendix~\ref{sub:minp_ar}.%

We also calculated the number of patterns found significant for different
controlled FWER levels $\alpha$. These results are depicted in Figure~\ref{fig:ar_results}.
The results indicate, that $\sample$ is more powerful than $\pool$
in these cases. This is mostly due to the different $p$-value calculation
methods, but can in part be because of the relative large number of
patterns and limited number of randomizations. 

For $\sample,$ the results are intuitive: When $\alpha$ level, i.e.,
the accepted probability of making a Type I error, increases, the
number of significant patterns increases. To conclude, the results
are reasonable.

\begin{figure}
\begin{centering}
\subfloat[$\paleo$]{\begin{centering}
\includegraphics{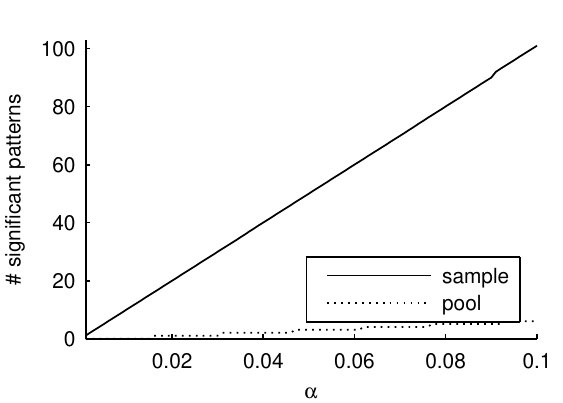}
\par\end{centering}

}\subfloat[$\courses$ ]{\begin{centering}
\includegraphics{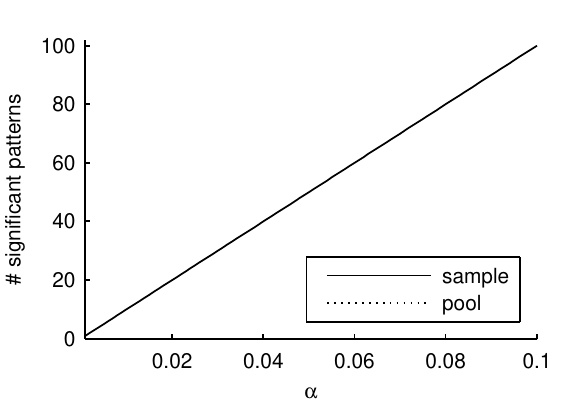}
\par\end{centering}

}\\
\subfloat[$\retail$]{\begin{centering}
\includegraphics{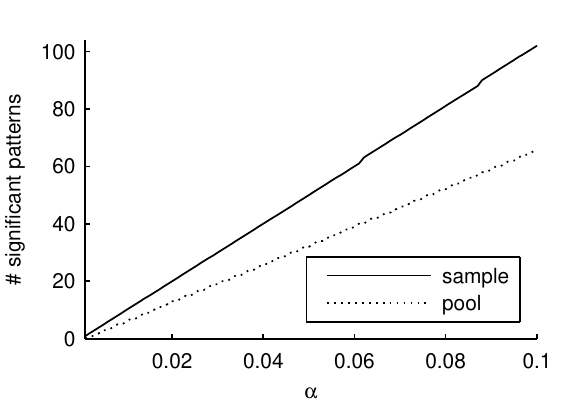}
\par\end{centering}

}
\par\end{centering}

\caption{Association rule mining results for both $p$-value calculation methods
and datasets. The lines depict the number of patterns found significant
for different controlled FWER levels ($\alpha$).\label{fig:ar_results}}

\end{figure}

\subsection{Frequent itemsets}

The third experiment is in similar context to the previous one. However,
this time we mined for frequent itemsets.

The test statistic $f$ was a variant of the lift~\cite{Webb06}:
\begin{equation}
f(x)=\frac{\mathrm{freq}(x)}{\prod_{A\in x}\mathrm{freq}(A)},\label{eq:lift}\end{equation}
where $x$ is an itemset, $A$ is a single attribute, and $\mathrm{freq}(x)\in[0,1]$
is the relative frequency of itemset $x$. The same datasets were
used as in association rule mining with the same minimum support thresholds.
Additionally, we set the smallest frequent itemset size to 2, not
to get individual columns as frequent itemsets. We used the same randomization
method as above that preserves the column margins. We use the name
\textsc{Col} for this method. The second randomization method we used
was presented in~\cite{Gionis06} with the name swap randomization
which additionally maintains the row margins. We will use the name
\textsc{Swap} for this method. Note that we used exactly the same
datasets, mining parameters and randomization methods as in~\cite{Gionis06}.

Each dataset was randomized 10000 times with both methods. Table~\ref{tab:fs_stats}
lists for different datasets the minimum support, number of frequent
itemsets in the original data, and the mean and standard deviation
of the number of frequent itemsets in the randomized datasets. Note
first that the expected numbers of frequent itemsets, and their standard
deviations, are close to the numbers reported in~\cite{Gionis06}.
The small differences are a result of different randomizations.%
\footnote{Additionally, we perform 10000 randomizations while \cite{Gionis06}
do only 500.%
}

\begin{table}
\begin{centering}
\begin{tabular}{lcccc}
\hline 
Dataset  & minsup  & $|P|$  & $|P_{i}^{\mathrm{\textsc{Col}}}|$  & $|P_{i}^{\mathrm{\textsc{Swap}}}|$ \tabularnewline
\hline 
$\paleo$  & 7  & 2828  & 227.4(11.6)  & 266.9(14.8) \tabularnewline
$\courses$  & 400  & 9678  & 146.6(2.8)  & 430.1(11.6) \tabularnewline
$\retail$  & 200  & 1384  & 860.3(7.0)  & 1615.1(11.9) \tabularnewline
\hline
\end{tabular}
\par\end{centering}

\caption{Mining parameters and statistics for frequent itemset maining. $|P|$
is the number of frequent itemsets with the original data; $|P_{i}^{\mathrm{\textsc{Col}}}|$
the mean number of frequent itemsets with random data from \textsc{Col};
and $|P_{i}^{\mathrm{\textsc{Swap}}}|$ the mean number of frequent
itemsets with random data from \textsc{Swap}. Standard deviations
are shown in parenthesis.\label{tab:fs_stats}}

\end{table}

We first carried out the minP-test for all combinations of dataset,
$p$-value calculation and randomization method. In all cases, the
minP-property was satisfied with sufficient accuracy. Figure~\ref{fig:fs_minp}
depicts one test result; all other minP-test results are presented
in Appendix~\ref{sub:minp_fs}.
\begin{figure}[ht!]
\begin{center}
\includegraphics{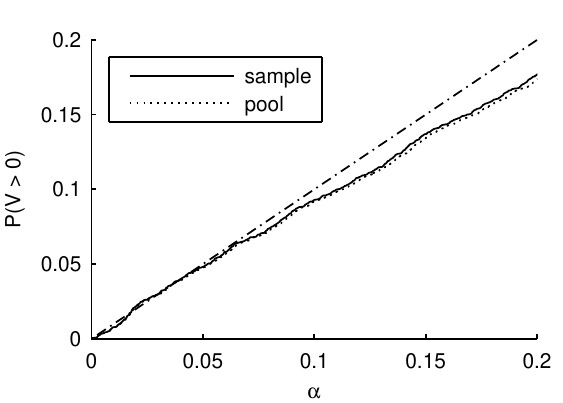} 
\par
\end{center}

\caption{The minP-test in frequent itemset mining for both methods with $\paleo$
dataset and \textsc{Swap} randomization method. The empirical FWER
is plotted against the controlled level. The diagonal dash-dotted
line should not be exceeded for the minP-property to be satisfied.
\label{fig:fs_minp}}

\end{figure}

Figure~\ref{fig:fs_results} depicts the number of patterns found
significant for different $\alpha$ levels for both randomization
and $p$-value calculation methods, and all datasets. From the results
it is clear the $\sample$ method is more powerful in all but $\retail$
with \textsc{Swap}. The reasons for this difference are the same as
in association rule mining. Note also that the swap randomization
is more restricted and, as expected, less patterns were found significant
in comparison to the other randomization approach.

\begin{figure}[ht!]
\begin{centering}
\subfloat[$\paleo$ with \textsc{Col} ]{\begin{centering}
\includegraphics{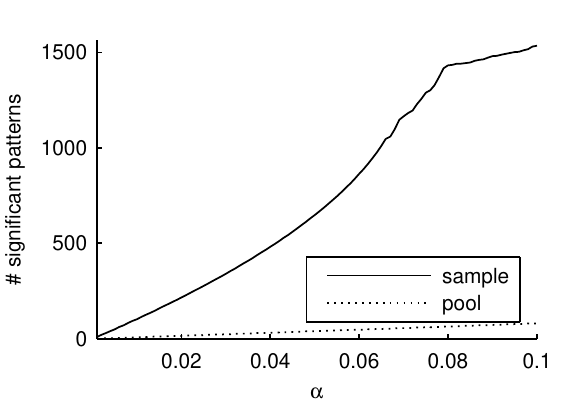}
\par\end{centering}
}\subfloat[$\paleo$ with \textsc{Swap}]{\begin{centering}
\includegraphics{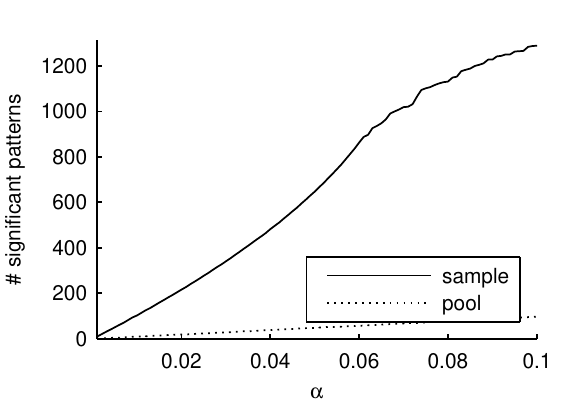}
\par\end{centering}}\\
\subfloat[$\courses$ with \textsc{Col} ]{\begin{centering}
\includegraphics{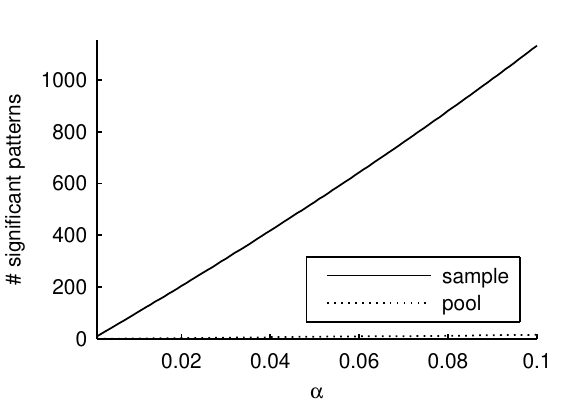}
\par\end{centering}
}\subfloat[$\courses$ with \textsc{Swap}]{\begin{centering}
\includegraphics{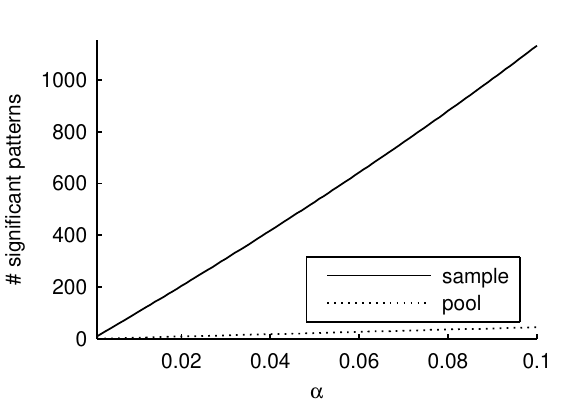}
\par\end{centering}}\\
\subfloat[$\retail$ with \textsc{Col} ]{\begin{centering}
\includegraphics{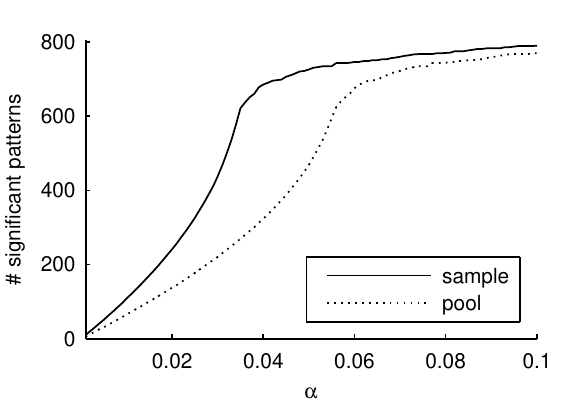}
\par\end{centering}
}\subfloat[$\retail$ with \textsc{Swap}]{\begin{centering}
\includegraphics{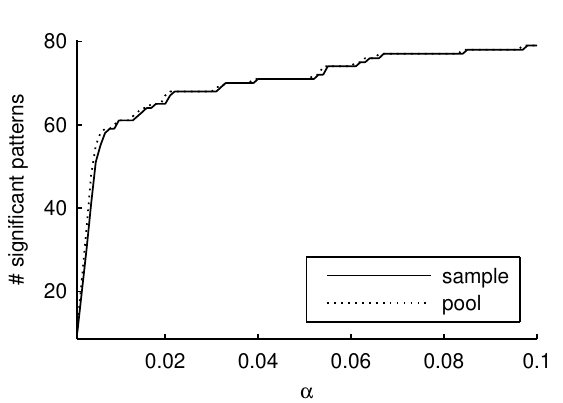}
\par\end{centering}}
\par\end{centering}

\caption{Frequent set mining results for both $p$-value calculation and randomization
methods, and datasets. The lines depict the number of patterns found
significant for different controlled FWER levels ($\alpha$).\label{fig:fs_results}}

\end{figure}

\subsection{Frequent subgraphs}

As a final experiment, we show how the methods can be used in the
setting of frequent subgraph mining. The problem is very similar to
finding frequent itemsets, but now the transactions are graphs and a
frequent pattern is a subgraph of the input graphs. We used
FSG~\cite{Kuramochi04} as a graph mining algorithm, which is a part of
Pafi%
\footnote{http://glaros.dtc.umn.edu/gkhome/pafi/overview%
} and readily available at the website of Karypis Laboratory. As a
dataset, we used a graph transaction dataset of different compounds%
\footnote{http://www.doc.ic.ac.uk/\textasciitilde{}shm/Software/Datasets/\\
  carcinogenesis/progol/carcinogenesis.tar.Z%
}, which has 340 different graphs and the largest graph has 214 nodes.
We calculated the test statistic $f$ for each subgraph as \[
f(x)=\mathrm{freq}(x)\log(\mathrm{\#\, nodes\, in}\, x).\] The
logarithm term is to weight larger subgraphs slightly more, because
they are considered more interesting than small ones.

We randomized the graphs by selecting two edges and switching the
end points together, mixing the edges between nodes. If switching
edges would create overlapping edges, the swap is not performed. The
method preserves the node degrees while creating a completely different
topology for the graph. This randomization has been used before in~\cite{Sharan05}
and later extended in~\cite{Hanhijarvi09}. Since our dataset is
a set of graphs, we randomized each graph individually by attempting
500 swaps, and combined the randomized graphs back to a transactional
dataset.%
\footnote{Notice that the test statistic may be unjustified in the chemistry
domain, and the randomization method may violate some laws of physics.
Despite this, we use them here to show that the methods can be used in
this setting as well.%
}

We used 10000 random datasets at support level 40, and calculated
the $p$-values with both methods. Statistics of the randomizations
are depicted in Table~\ref{tbl:fg}.

\begin{table}
\begin{centering}
\begin{tabular}{ccc}
\hline 
minsup  & $|P|$  & $|P_{i}|$ \tabularnewline
\hline 
40  & 140  & 191.8(13.4) \tabularnewline
\hline
\end{tabular}
\par\end{centering}

\caption{Frequent subgraph mining.$|P|$ is the number of frequent
subgraphs with the original data; $|P_{i}|$ the expected number of
frequent graphs with random data. Standard deviation is shown in parenthesis.\label{tbl:fg}}

\end{table}
The minP-test and the number of patterns found significant for different
$\alpha$ levels are shown in Figure~\ref{fig:fsg-results}. The
minP-property is satisfied, and the power of both methods are similar.
As a conclusion, the $p$-value calculation methods can also be used
in frequent subgraph mining.

\begin{figure}
\subfloat[minP test]{\begin{centering}
\includegraphics{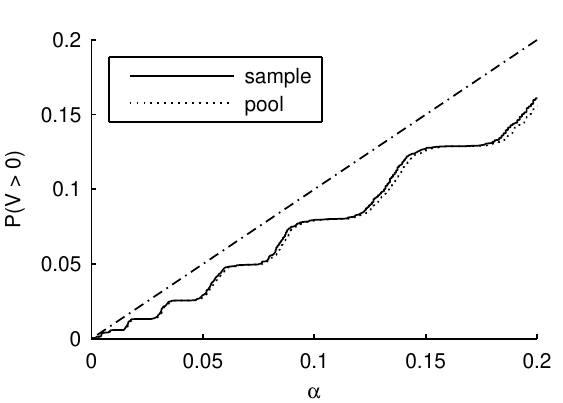}
\par\end{centering}

}\subfloat[power]{\begin{centering}
\includegraphics{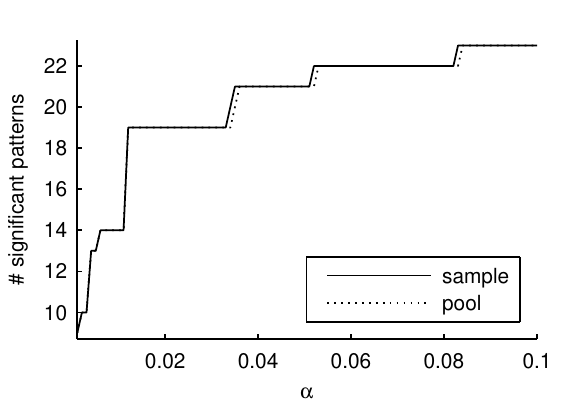}
\par\end{centering}

}

\caption{Frequent subgraph mining results with \textsc{Compound} dataset. In
a), the true FWER is plotted against the controlled level. The diagonal
dash-dotted line should not be exceeded for the minP-property to be
satisfied. In b), number of patterns found significant for different
controlled FWER levels.\label{fig:fsg-results}}

\end{figure}

\section{Discussion and conclusions}

\label{sec:conclusion}

As shown by the recent interest in randomization methods, there is
a clear need for new significance testing methods in data mining applications.
Especially within the framework of multiple hypothesis testing, the
significance tests for data mining results have been lacking.

In this paper, we have introduced two methods to test the significance
of patterns found by a generic data mining algorithm. Unlike much
of the previous work, we do make only very general assumptions of
the data mining algorithm and no assumptions at all of the data nor
on the dependency structure of the patterns output by the data mining
algorithm. Hence, our approach is suitable for many, if not most,
data mining scenarios.

The only assumption we need to make for the purposes of the proof is
that the algorithm satisfies the minP-property of Definition
\ref{def:minp}.  It is possible to find adversarial examples of data
mining algorithms that fail to satisfy the minP-property; however, our
results with toy and real data show that our methods behave
consistently and hence we argue that this is not a serious limitation
in practice. In any case, having such an assumption is not
extraordinary in significance testing. Most of the existing
significance testing methods in fact make some simplifying assumptions
of the distribution of the test statistics; these methods are
conventionally considered reliable if the assumptions are at least
approximately satisfied.

In the paper, we have studied and the scenario where the FWER is being
controlled as our proof of Theorem \ref{thm:main} is specific to
the Holm-Bonferroni test that controls the FWER. However, in many
cases the control of FDR could be a better choice --- for example
in exploratory data analysis where we are looking for patterns that
would warrant a more detailed study. Intuitively, replacing the Holm-Bonferroni
test of Equation (\ref{eq:HB}) with a test that controls the FDR,
such as Benjamini-Yekutieli \cite{Benjamini01}, should work; the
proof of this conjecture is however left for future work.

On real-world datasets, our experiments show that the proposed methods
are also powerful. Hence,
we not only control the FWER under the desired $\alpha$ level, but
also the method avoids as much as possible the false negatives. This
is related to the fact that due to the nature of randomization we
can choose the null hypothesis very freely.


\appendix

\section{Extended results}

\subsection{MinP-tests for association rule mining}
\label{sub:minp_ar}%

Figure \ref{fig:99} shows minP test for different $p$-value
calculation methods and datasets.

\begin{figure}[h]
\centering{}\subfloat[\textsc{$\paleo$}]{\includegraphics{paleo_7_sar_1000_col_minp}

}\subfloat[\textsc{$\courses$}]{\includegraphics{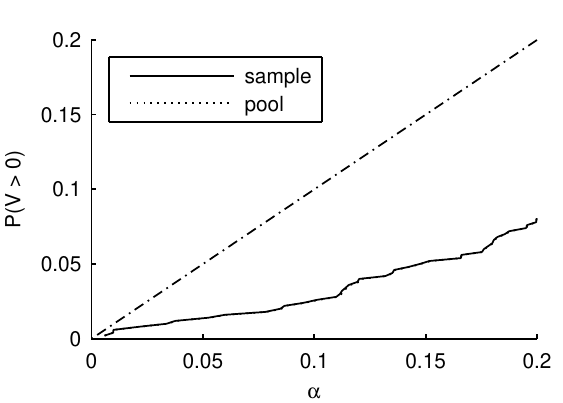}

}\\
\subfloat[\textsc{$\retail$}]{\includegraphics{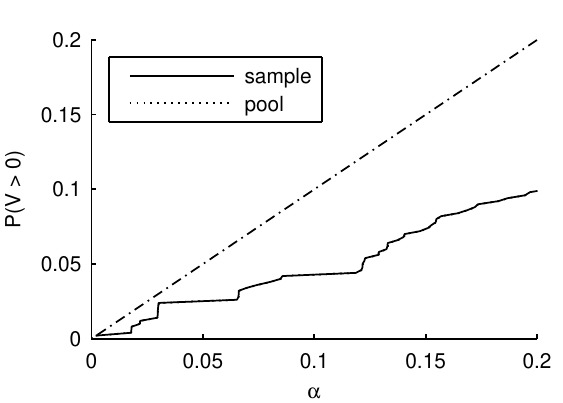}

}\caption{\label{fig:99}
minP test for different $p$-value calculation methods and datasets.
The true FWER is plotted against the controlled level. The diagonal
dash-dotted line should not be exceeded for the minP-property to be
satisfied. }

\end{figure}

\subsection{MinP-tests for frequent itemset mining}
\label{sub:minp_fs}%

Figures \ref{fig:99pal}--\ref{fig:99retail} show the minP test for
different $p$-value calculation and randomization methods.

\begin{figure}[h]
\subfloat[\textsc{Col}]{\includegraphics{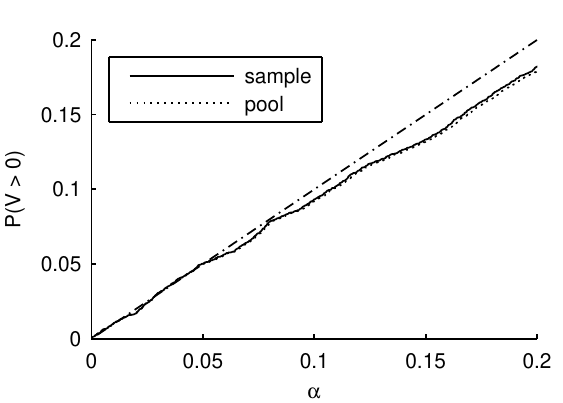}

}\subfloat[\textsc{Swap}]{\includegraphics{paleo_7_10000_swap_lift_minp}

}\caption{\label{fig:99pal}
minP test for different $p$-value calculation and randomization methods
with $\paleo$. The true FWER is plotted against the controlled level.
The diagonal dash-dotted line should not be exceeded for the minP-property
to be satisfied. }

\end{figure}[h]
\begin{figure}
\subfloat[\textsc{Col}]{\includegraphics{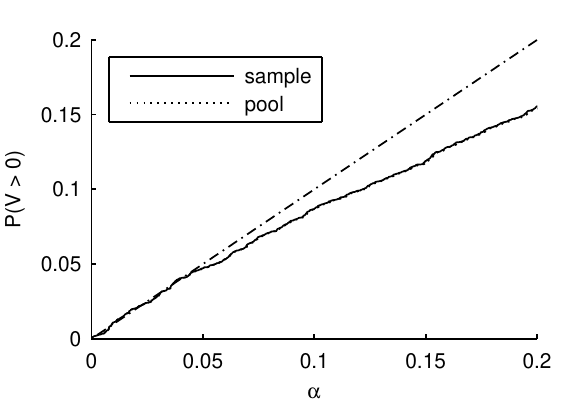}

}\subfloat[\textsc{Swap}]{\includegraphics{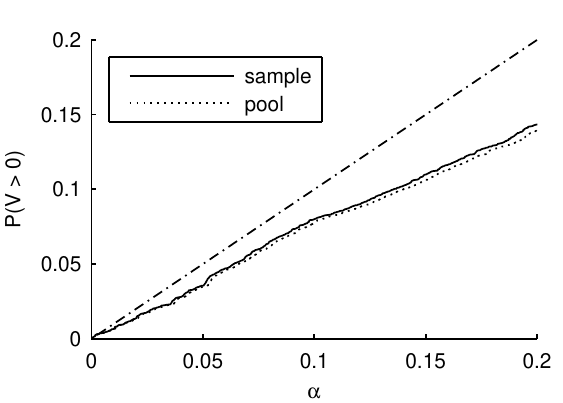}}\caption{\label{fig:99cou}
minP test for different $p$-value calculation and randomization methods
with $\courses$. The true FWER is plotted against the controlled
level. The diagonal dash-dotted line should not be exceeded for the
minP-property to be satisfied. }

\end{figure}
\begin{figure}[h]
\subfloat[\textsc{Col}]{\includegraphics{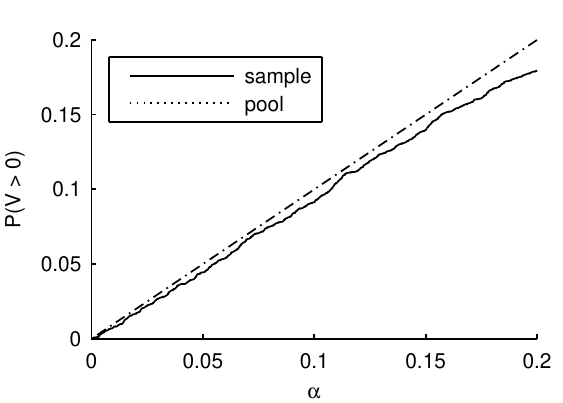}

}\subfloat[\textsc{Swap}]{\includegraphics{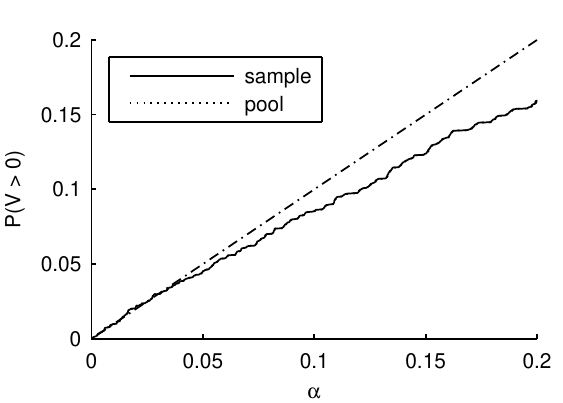}}\caption{\label{fig:99retail}
minP test for different $p$-value calculation and randomization methods
with $\retail$. The true FWER is plotted against the controlled level.
The diagonal dash-dotted line should not be exceeded for the minP-property
to be satisfied. }

\end{figure}

\end{document}